\documentclass{article}
\usepackage[square, numbers]{natbib}

\usepackage[preprint]{neurips_2026}

\usepackage[utf8]{inputenc} 
\usepackage[T1]{fontenc}    
\usepackage{hyperref}       
\usepackage{url}            
\usepackage{booktabs}       
\usepackage{amsfonts}       
\usepackage{nicefrac}       
\usepackage{microtype}      
\usepackage{xcolor}         

\usepackage{amssymb} 
\usepackage{graphicx}
\usepackage{float}
\usepackage{amsthm}
\newtheorem{proposition}{Proposition}

\usepackage{multirow}
\theoremstyle{remark}

\usepackage[table]{xcolor}
\usepackage{colortbl}
\usepackage{wrapfig}
\usepackage{subcaption}
\usepackage{xcolor}
\definecolor{g1}{HTML}{2563EB}
\definecolor{g2}{HTML}{3B6FE0}
\definecolor{g3}{HTML}{6B6BCB}
\definecolor{g4}{HTML}{9C5FA8}
\definecolor{g5}{HTML}{C95684}
\definecolor{g6}{HTML}{E84F5C}
\definecolor{g7}{HTML}{F97316}
\usepackage{fontawesome5}
\usepackage{hyperref}

\title{\raisebox{-0.35em}{\includegraphics[width=1.3em,height=1.5em]{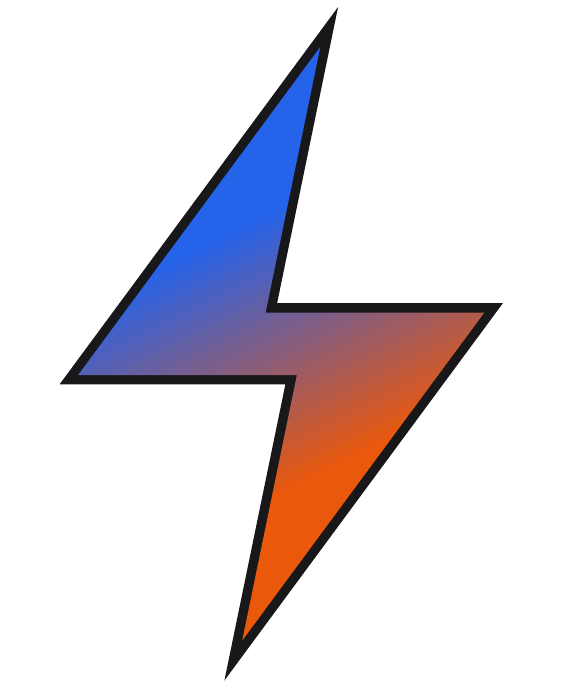}} \textcolor{g1}{F}\textcolor{g2}{l}\textcolor{g3}{a}\textcolor{g4}{s}\textcolor{g5}{h}\textcolor{g6}{-}\textcolor{g7}{W}\textcolor{g7}{A}\textcolor{g7}{M}: Modality-Aware Distillation for World Action Models}

\author{%
  Arman Akbari$^{1}$\thanks{Correspondence to: \texttt{akbari.ar@northeastern.edu}} \quad
  Ci Zhang$^{2}$ \quad
  Arash Akbari$^{1}$ \quad
  Lin Zhao$^{1}$ \quad
  Yixiao Chen$^{1}$ \\
  \textbf{Weiwei Chen}$^{3}$ \quad
  \textbf{Xuan Zhang}$^{1}$ \quad
  \textbf{Geng Yuan}$^{2}$ \quad
  \textbf{Yanzhi Wang}$^{1,3}$ \\
  \\
  $^{1}$Northeastern University \quad
  $^{2}$University of Georgia \quad
  $^{3}$EmbodyX Inc. \\
  \\
\faGlobe\, \textbf{Project Page:}\, \href{https://flashwam.github.io}{\texttt{flashwam.github.io}}
}

\usepackage{amsmath} 
\begin{document}

\maketitle

\begin{abstract}

World-action models (WAMs) jointly generate future video and robot actions through iterative diffusion, achieving strong performance on manipulation benchmarks but requiring tens of denoising steps, a cost that precludes real-time control. Step distillation has emerged as the natural remedy, but off-the-shelf methods break down in the joint video-action setting because video and action streams use different SNR-shifted noise schedules and reach training with substantially different marginal noise distributions, an asymmetry that single-modality distillation methods cannot accommodate. We introduce \textbf{Flash-WAM}, a modality-aware step-distillation framework inspired by consistency distillation that selects the consistency function for each modality to match its noise regime: a linear-gradient-scaling parametrization for the action stream's low-noise regime, paired with a variance-preserving parametrization for the video stream's high-noise regime, grounded in a structural analysis of the consistency-function family that characterizes the achievable gradient scaling under the consistency boundary condition. Instantiated on LingBot-VA, Flash-WAM compresses inference to a single step in each modality. On RoboTwin 2.0, this reduces per-chunk latency from $8.1$ seconds to $348$ ms on NVIDIA L40S, a $23{\times}$ speedup that enables real-time inference. Flash-WAM preserves task success on simulation benchmarks ($85.5\%$ RoboTwin 2.0, $95.7\%$ LIBERO) and substantially recovers real-world performance ($60\%$ average on a Unitree G1 humanoid robot), while naive consistency distillation drops to $24\%$ at the same step budget.


\end{abstract}

\section{Introduction}
\label{introduction}

Robotic foundation models aim to map perception and language to actions across diverse embodiments and tasks. The dominant approach has been Vision-Language-Action (VLA) policies~\citep{pi0,pi05,openvlaoft,xvla,vote,groot,dexvla}, which adapt pretrained vision-language models (VLMs) to predict actions directly from observations. While effective in-distribution, VLA policies inherit a representation built for static visual understanding rather than physical dynamics, limiting their generalization to novel scenes, objects, and long-horizon tasks~\cite{longhorizon,wamrobustness}. This has motivated a shift toward using World Models for embodied AI~\citep{wmforrobot,wm4robot,wmsurvey}, and in particular toward world-action models (WAMs)~\citep{lingbotva, motus, dreamzero}, which build on pretrained video generation backbones and jointly generate future visual states and the actions that produce them. By inheriting spatiotemporal priors from large-scale video pretraining, WAMs are emerging as the strongest candidate for general-purpose robotic foundation models.

Most current WAMs realize joint video-action generation through a two-stage process: at each control step, the model first denoises a chunk of future video latents, then decodes the next action sequence conditioned on the predicted frames~\citep{vgrp,lingbotva,motus,dreamzero}. Both stages are diffusion processes requiring iterative denoising, and both contribute substantially to per-chunk latency. As shown in Figure~\ref{fig:teaser}~(a), on the RoboTwin 2.0~\cite{robotwin} benchmark, the representative state-of-the-art WAM LingBot-VA~\citep{lingbotva} runs 25 video denoising steps and 50 action denoising steps per chunk, costing 3550\,ms for video and 4550\,ms for action on a single NVIDIA L40S GPU, for a total of 8.1\,s per chunk (Figure~\ref{fig:teaser}). At this cost, real-time closed-loop control is out of reach. Existing WAMs~\citep{lingbotva,dreamzero} mitigate this through engineering-level optimizations such as KV caching of past observations, partial denoising via noisy history augmentation, and asynchronous prediction-execution pipelines. These techniques reduce wall-clock latency without changing the underlying number of denoising steps, and remain orthogonal to methods that compress the denoising procedure itself.

\begin{figure}[t]
    \centering
    \begin{minipage}[t]{0.62\textwidth}
        \centering
        \includegraphics[width=\textwidth]{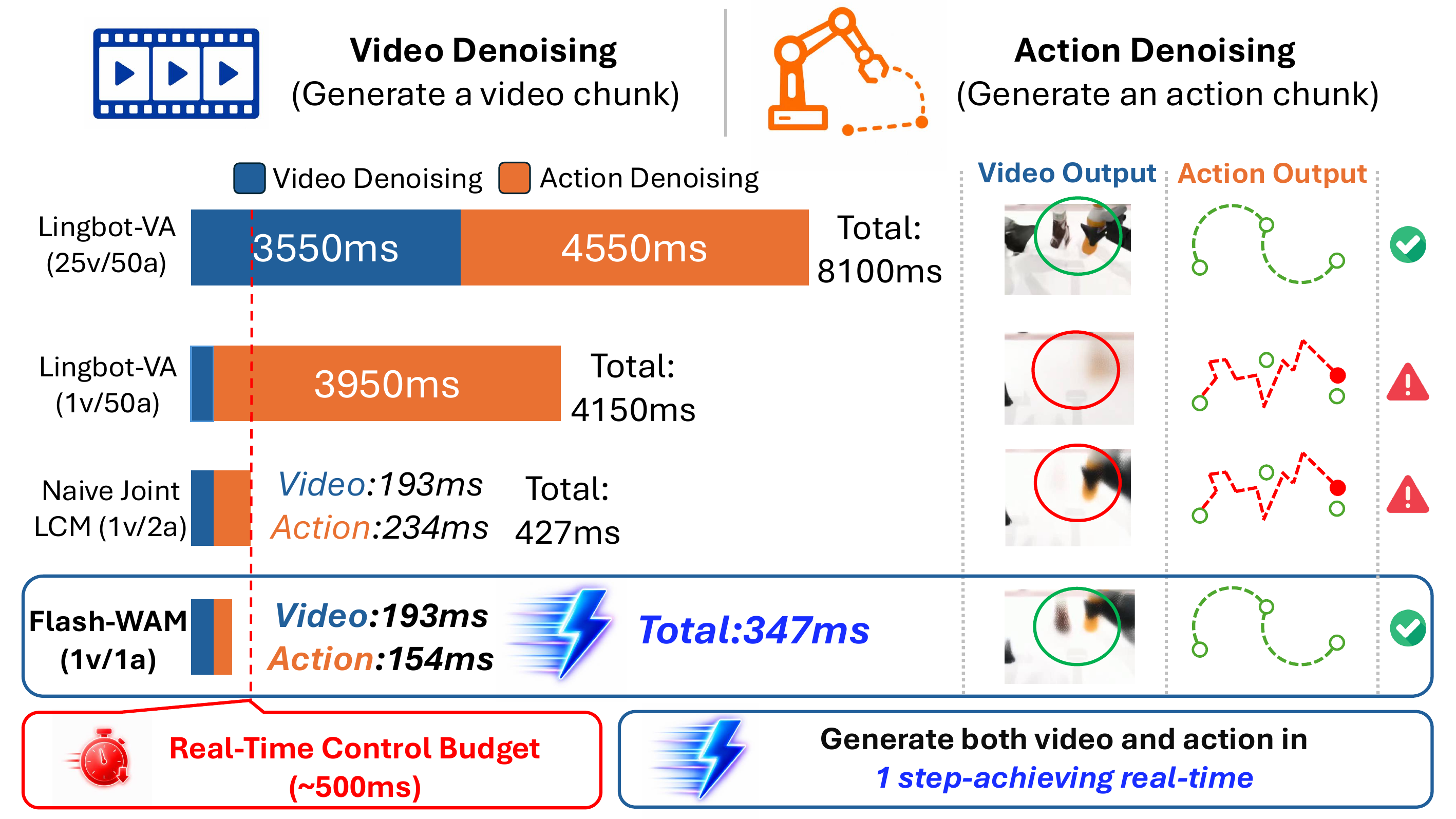}
        \subcaption{Per-chunk inference latency.}
        \label{fig:teaser_left}
    \end{minipage}\hfill
    \begin{minipage}[t]{0.35\textwidth}
        \centering
        \includegraphics[width=\textwidth]{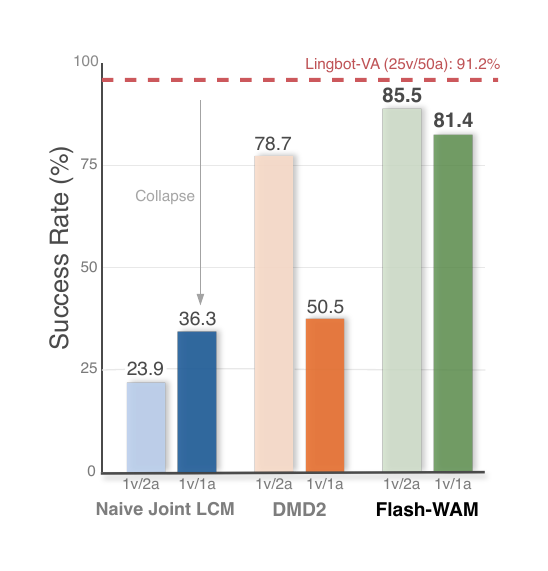}
        \subcaption{RoboTwin 2.0 success rate.}
        \label{fig:barchart}
    \end{minipage}
    \caption{\textbf{(a)} Per-chunk inference latency on a single NVIDIA L40S. Flash-WAM brings WAM inference below the real-time control budget. \textbf{(b)} Average success rate on RoboTwin 2.0. Off-the-shelf distillation methods drop sharply, while Flash-WAM preserves \emph{teacher}-level performance.}
    \label{fig:teaser}
\end{figure}

A natural remedy is \emph{step distillation}, which enables diffusion models to generate comparable results with far fewer denoising steps. Step distillation has been extensively developed for single-modality generation and has demonstrated substantial speedups for image and video synthesis~\citep{videolcm,lcm,progressive,dmd2}. However, transferring these methods to the joint video-action setting is non-trivial: distribution-matching approaches such as DMD2~\citep{dmd2} require auxiliary score networks and adversarial training, which couple awkwardly with the asymmetric per-modality noise schedules used in WAMs; progressive distillation~\cite{progressive} requires a multi-stage training pipeline that scales poorly to the large pre-trained backbones that underlie modern WAMs. Consistency distillation~\cite{cd,lcm} is the most natural fit since it requires no auxiliary networks, integrates cleanly into existing flow-matching frameworks, and admits the analytical treatment we develop in this paper. Yet even this otherwise reliable approach does not carry over directly to WAMs. Applying consistency distillation naively to a joint video-action model collapses task success rates from over $91\%$ to as low as $23\%$ on RoboTwin (Section~\ref{sec:ablation}), posing a key obstacle to scaling WAMs toward real-time inference.


However, naively transferring consistency distillation to WAMs does not work. We uncover a fundamental incompatibility between standard consistency distillation and joint diffusion under asymmetric noise schedules. Video latents and action sequences have fundamentally different statistical properties: video is high-dimensional and structurally redundant, while actions are low-dimensional and precision-critical. To accommodate this asymmetry, WAMs employ different Signal-to-Noise-Ratio (SNR) shifted noise schedulers per modality~\citep{lingbotva, dreamzero}, matched to each modality's information content. As a consequence, the two streams reach the consistency-distillation loss under substantially different marginal noise distributions: video noise concentrates at high $\sigma$, while action noise spreads across the full range with substantial mass at low $\sigma$. We show that existing consistency distillation methods (e.g., LCM~\citep{lcm}) provide gradient signal that vanishes \emph{quadratically} as $\sigma \to 0$, leaving the action stream with negligible learning signal across most of its training distribution. Naive joint distillation therefore collapses action accuracy even when video reconstruction is preserved.

After identifying this incompatibility, we propose \textbf{Flash-WAM}, a step-distillation framework for joint video-action diffusion models. The core idea is to treat video and action distillation as fundamentally different problems with different gradient-signal requirements. Each modality receives a consistency function matched to where its training distribution concentrates: a variance-preserving choice for the high-$\sigma$ regime where video trains, and a linear-gradient-scaling choice for the low-$\sigma$ regime where actions train. We apply Flash-WAM to the released LingBot-VA model~\citep{lingbotva}, the state-of-the-art open-source joint video-action diffusion model on manipulation benchmarks, whose parameter count is small enough to run on commodity edge hardware where step distillation has the most practical impact. Flash-WAM recovers task success with a single video step and one action step, achieving a $23{\times}$ speedup that brings per-chunk inference latency from $8.1$ seconds to $348$ ms on a single NVIDIA L40S, enabling real-time inference. Our contributions are as follows:
\begin{itemize}
    \item \textbf{Diagnosis of joint-modality distillation failure.} We identify and characterize the structural failure mode that prevents off-the-shelf consistency distillation methods from succeeding in the joint video-action regime, with formal analysis and empirical experiments.

    \item \textbf{Modality-aware consistency distillation.} We propose Flash-WAM, a step-distillation framework that selects different members of the consistency-function family for each modality based on its noise regime. The framework is grounded in a structural analysis of the consistency-function family, characterizing the achievable gradient scaling under the consistency boundary condition.

\item \textbf{Real-time WAM inference.} On LingBot-VA, Flash-WAM compresses inference to a single step in each modality on RoboTwin 2.0, reducing per-chunk latency from $8.1$ seconds to $348$ ms (up to $23{\times}$ speedup) on NVIDIA L40S. At one video step and two action steps, Flash-WAM recovers $85.5\%$ on RoboTwin 2.0 and $95.7\%$ on LIBERO; at one video step and one action step, it retains $81.4\%$ and $95.1\%$ respectively. On a Unitree G1 humanoid robot, Flash-WAM achieves $60\%$ average success across three manipulation tasks, recovering most of the unaccelerated model's $66.7\%$ while video-only LCM collapses to $43.3\%$.

\end{itemize}
\section{Related Works}
\label{related}

\paragraph{Unified World Action Models.}
Recent world-action models couple video and action generation in a single framework. LingBot-VA~\citep{lingbotva} casts policy generation as autoregressive video-action diffusion through a shared transformer backbone; Motus~\citep{motus} adopts a Mixture-of-Transformers architecture coupling a vision-language model, video generator, and action generator via cross-attention; and DreamZero~\citep{dreamzero} integrates inference-time optimizations that reduce denoising steps at the architecture level. They share an inference-time bottleneck dominated by iterative video and action denoising. A complementary line of work reduces this bottleneck by avoiding test-time video generation: GigaWorld-Policy~\citep{gigaworldpolicy} treats future visual dynamics as a reasoning signal under a causal mask rather than an explicit prediction, and Fast-WAM~\citep{fastwam} repurposes a pretrained video DiT as a single-pass encoder for action generation. Our Flash-WAM follows a different direction. Rather than removing video generation, we accelerate it through step distillation, preserving the original WAM inference structure while collapsing each modality's denoising into one step.

\paragraph{Step Distillation.} 
Recent works compress the iterative denoising process of diffusion models into a small number of inference steps, broadly organized into two families. \emph{Trajectory-following} methods~\citep{progressive,cd,lcm,easycd,shortcut} train the student to follow the teacher's ODE trajectory: progressive distillation~\citep{progressive} iteratively halves the number of sampling steps, and consistency models~\citep{cd,lcm,easycd} enforce that any point on the trajectory maps to the same clean endpoint. \emph{Distribution-matching} methods~\citep{dmd1,dmd2,adm,sim,fdiverg,phaseddmd} instead train the student so that its output distribution matches the teacher's, using auxiliary score networks and KL-style or adversarial objectives. Both families have been extended to video diffusion, addressing the additional cost of high-dimensional spatiotemporal tokens~\citep{videolcm,animatelcm,dollar,t2vv2,tdm}. 
However, these methods are designed for single-modality generation under a single noise distribution. Trajectory-following methods are particularly attractive for our setting because they integrate cleanly into existing flow-matching frameworks and admit the analytical treatment of gradient signal we develop in Section~\ref{sec:naive_fails}. 

\section{Preliminaries}
\label{preliminaries}

Modern world-action models generate future visual states and the corresponding action sequences using flow matching, a continuous-time generative process that transforms noise into data through iterative denoising. The cost of this iterative process motivates step distillation, in which a \emph{student} model $\theta_S$ is trained to reproduce the output of a pretrained \emph{teacher} model $\theta_T$ in fewer denoising steps. We review flow matching (Section~\ref{sec:flowmatching}) as the underlying generative framework and consistency distillation (Section~\ref{sec:cd}) as the acceleration mechanism, then formalize the joint video-action setting considered in this work (Section~\ref{problem_formulation}).

\subsection{Flow Matching}
\label{sec:flowmatching}

Flow matching~\citep{fm,rectified} is a continuous-time generative framework that learns to transport samples from a noise distribution to a data distribution along straight-line interpolation paths. Clean data $\mathbf{x}_0$ is corrupted to $\mathbf{x}_\sigma = (1-\sigma)\,\mathbf{x}_0 + \sigma\,\boldsymbol{\epsilon}$ with $\boldsymbol{\epsilon} \sim \mathcal{N}(\mathbf{0},\mathbf{I})$ and $\sigma \in [0,1]$. A neural network $v_\theta$ is trained to predict the velocity $v = \boldsymbol{\epsilon} - \mathbf{x}_0 = d\mathbf{x}_\sigma/d\sigma$ via the flow matching objective:
\begin{equation}
\mathcal{L}_{\text{FM}} = \mathbb{E}_{\mathbf{x}_0, \boldsymbol{\epsilon}, \sigma} \left\| v_\theta(\mathbf{x}_\sigma, \sigma) - (\boldsymbol{\epsilon} - \mathbf{x}_0) \right\|^2,
\label{eq:fm_loss}
\end{equation}
from which the clean estimate is recovered as $\hat{\mathbf{x}}_0 = \mathbf{x}_\sigma - \sigma\,v_\theta$. To control where training mass concentrates along the noise schedule, an SNR-shifted sampler is commonly used:
\begin{equation}
\sigma = \frac{s\,\tilde{\sigma}}{1 + (s-1)\,\tilde{\sigma}}, \qquad \tilde{\sigma} \sim \mathcal{U}[0,1],
\label{eq:snr_shift}
\end{equation}
parametrized by a shift $s \geq 1$; larger $s$ pushes the distribution toward higher noise levels. Samples are generated at inference by numerical Euler integration of the velocity field from $\sigma = 1$ to $\sigma = 0$.

\subsection{Consistency Distillation}
\label{sec:cd}
Consistency models~\citep{cd,lcm,easycd} accelerate sampling by enforcing the \emph{consistency property}: a consistency function $f(\mathbf{x}_\sigma, \sigma)$ maps any point on the probability flow Ordinary Differential Equation (ODE) trajectory to its clean endpoint at $\sigma = 0$. The general form is
\begin{equation}
f(\mathbf{x}_\sigma, \sigma) = a(\sigma)\,\mathbf{x}_\sigma + b(\sigma)\,v_\theta(\mathbf{x}_\sigma, \sigma),
\label{eq:consistency_general}
\end{equation}
where $a, b: [0,1] \to \mathbb{R}$ satisfy the boundary condition $a(0)=1,\; b(0)=0$ that enforces $f(\mathbf{x}_0, 0) = \mathbf{x}_0$. Standard parametrization~\cite{edm,cd} takes $a(\sigma) = c_{\text{skip}}(\sigma) + c_{\text{out}}(\sigma)$ and $b(\sigma) = -c_{\text{out}}(\sigma)\,\sigma$, with $c_{\text{skip}} = \sigma_d^2/(\sigma^2 + \sigma_d^2)$ and $c_{\text{out}} = \sigma\sigma_d/\sqrt{\sigma^2 + \sigma_d^2}$.

In the distillation setting, a frozen teacher $\theta_T$ provides guided Euler steps while a student $\theta_S$ and an Exponential Moving Average (EMA) target $\theta_{S'}$ are trained to agree along these trajectories. At each iteration, a noise level $\sigma_s$ is sampled, the schedule is advanced $k$ discrete steps to obtain $\sigma_e < \sigma_s$, and the target $\tilde{\mathbf{x}}_{\sigma_e} = \mathbf{x}_{\sigma_s} + \hat{v}_{\text{cfg}}\,(\sigma_e - \sigma_s)$ is formed via a teacher Euler step (with classifier-free guidance during distillation). The student is trained against this target via the consistency loss:
\begin{equation}
\mathcal{L}_{\text{CD}} = d\!\left(f_{\theta_S}(\mathbf{x}_{\sigma_s}, \sigma_s),\; f_{\theta_{S'}}(\tilde{\mathbf{x}}_{\sigma_e}, \sigma_e)\right),
\label{eq:cd_loss}
\end{equation}
where $d$ is a distance metric.

\subsection{Problem Formulation}
\label{problem_formulation}

World-action models (WAMs) decompose policy generation into two coupled stages: \emph{visual dynamics prediction}, which predicts how the world will evolve in latent space, and \emph{inverse dynamics}, which recovers the actions consistent with that predicted transition. Given a context $\mathbf{C}$ summarizing past observations, past actions, and a language instruction, a WAM jointly samples a chunk of $K$ future video latents $\mathbf{x}^v$ and the corresponding action sequence $\mathbf{x}^a$:
\begin{align}
\mathbf{x}^v &\sim p_\theta\!\left(\mathbf{x}^v \mid \mathbf{C}\right) && \text{(visual dynamics)} \label{eq:wam_video}\\
\mathbf{x}^a &\sim p_\theta\!\left(\mathbf{x}^a \mid \mathbf{x}^v,\, \mathbf{C}\right) && \text{(inverse dynamics)} \label{eq:wam_action}
\end{align}
This autoregressive factorization grounds actions in predicted future states. Both stages share the same transformer parameters $\theta$, and each stage is realized as flow matching (Eq.~\ref{eq:fm_loss}): visual dynamics is sampled by Euler integration of a video velocity field $v^v_\theta$ over $N^v$ steps, and inverse dynamics by integration of an action velocity field $v^a_\theta$ over $N^a$ steps. Generating one chunk therefore requires $N^v + N^a$ sequential transformer forward passes, dominating per-chunk latency and preventing real-time control. We use the shorthand $N^v$v/$N^a$a to denote a specific Number of Function Evaluation (NFE) configuration; for example, $25$v/$50$a denotes $25$ video and $50$ action denoising steps.

Reducing the per-chunk denoising cost via step distillation (Eq.~\ref{eq:cd_loss}) is the natural path to real-time deployment. However, the joint video-action setting departs from the single-modality regime in which distillation methods are typically designed. Following standard practice~\citep{lingbotva, dreamzero}, the two stages use independent SNR-shifted schedulers (Eq.~\ref{eq:snr_shift}) with per-modality shift parameters $s^v$ (video) and $s^a$ (action) satisfying

\begin{equation}
s^v > s^a,
\label{eq:snr_asymmetry}
\end{equation}
reflecting that high-dimensional, structurally redundant video latents tolerate heavier per-step noise, while low-dimensional, precision-critical action sequences require a gentler schedule. Because the two schedulers concentrate training mass at different parts of the noise schedule, the two modalities reach the consistency-distillation loss (Eq.~\ref{eq:cd_loss}) in structurally different noise regimes: a single consistency function $f(\mathbf{x}_\sigma, \sigma) = a(\sigma)\mathbf{x}_\sigma + b(\sigma)v_\theta$ (Eq.~\ref{eq:consistency_general}) applied uniformly across them cannot serve both at once. This is the central obstacle that Flash-WAM addresses, and the focus of Section~\ref{methodology}.



\section{Methodology}
\label{methodology}

\begin{figure}[t]
    \centering
    \includegraphics[width=\linewidth]{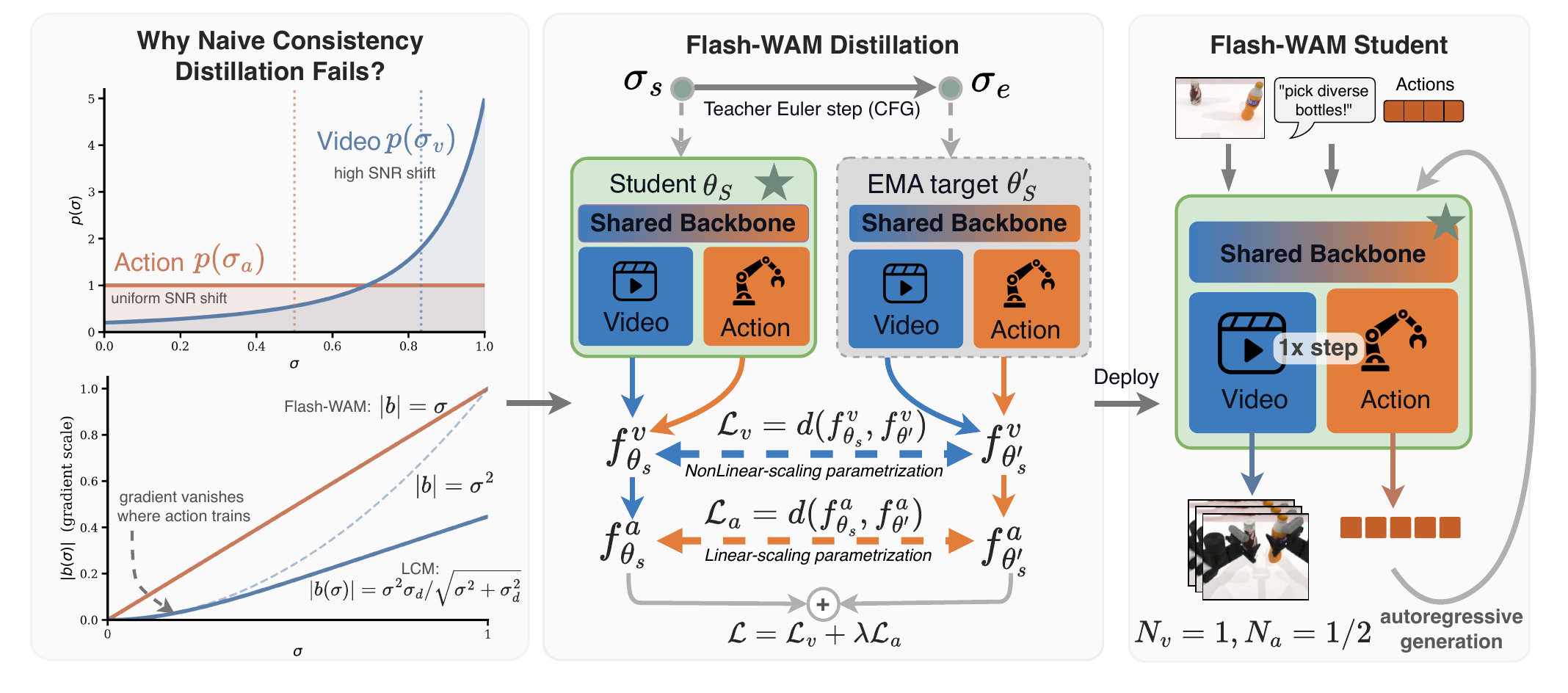}
    \vspace{-16pt} 
    \caption{Overview of Flash-WAM. \textbf{Left:} the diagnostic motivation showing why naive consistency distillation fails on joint video-action models. \textbf{Middle:} the Flash-WAM training pipeline with modality-aware consistency functions. \textbf{Right:} the distilled \emph{student} at deployment, autoregressively generating video and actions with single denoising step.}
    \label{fig:overview}
\end{figure}

We introduce \textbf{Flash-WAM}, a step-distillation framework for joint video-action diffusion models. Under the asymmetric noise schedules of Section~\ref{problem_formulation} ($s^v > s^a$), the two streams reach the consistency-distillation loss in structurally different noise regimes, and the consistency function applied to each stream must be selected to match its regime. Section~\ref{sec:naive_fails} establishes why off-the-shelf consistency distillation fails when applied uniformly to both modalities. Section~\ref{sec:adaptive} then derives Flash-WAM's modality-aware consistency functions, with parametrizations selected to match each modality's marginal noise distribution and a joint training objective that distills both streams together.

\subsection{The Joint Distillation Regime}
\label{sec:naive_fails}

The most direct approach to distilling a joint video-action diffusion model is to apply a single consistency function uniformly across both modalities. In the joint regime defined by Section~\ref{problem_formulation}, this assumption fails since the two modalities reach the loss with substantially different marginal $\sigma$ distributions. The video stream concentrates near the upper end of $[0,1]$, while the action stream spreads across the full range, placing substantial training mass at low $\sigma$. We show that this asymmetry produces a structural failure mode rather than a tunable inefficiency, motivating a framework that handles each modality's regime explicitly.

We trace this failure to the gradient signal that the consistency loss provides at each noise level. Recall from Section~\ref{sec:cd} that any valid consistency function takes the form $f(\mathbf{x}_\sigma, \sigma) = a(\sigma)\,\mathbf{x}_\sigma + b(\sigma)\,v_\theta$ with boundary condition $a(0)=1$, $b(0)=0$. Since $f$ depends on $\theta$ only through $v_\theta$, the gradient of the consistency loss with respect to $\theta$ scales pointwise as $|b(\sigma)|$: whenever $|b(\sigma)|$ is small, the network receives little learning signal at noise level $\sigma$ regardless of the prediction quality of $v_\theta$. The choice of $b$ therefore determines where in the noise schedule the model can effectively learn.

As a concrete representative of this family, consider the standard LCM parametrization with $b_{\mathrm{LCM}}(\sigma) = -\sigma^2 \sigma_d / \sqrt{\sigma^2 + \sigma_d^2}$. Both $b_{\mathrm{LCM}}(0) = 0$ and $b_{\mathrm{LCM}}'(0) = 0$, so a Taylor expansion at zero gives $|b_{\mathrm{LCM}}(\sigma)| = \sigma^2 / \sigma_d + \mathcal{O}(\sigma^4)$ which is a quadratic vanishing as $\sigma \to 0$. The left panel of Figure~\ref{fig:overview} quantifies the gap: at $\sigma = 0.1$, LCM's gradient-scale factor $|b_{\mathrm{LCM}}(\sigma)|$ (blue line) is roughly $36\times$ smaller than the factor at the high-$\sigma$ regime where video lives.
The quadratic vanishing is not specific to LCM but reflects where in the consistency-function family LCM sits. The following result characterizes the best achievable scaling near $\sigma = 0$:

\begin{proposition}[Optimal gradient scaling near $\sigma=0$]
\label{prop:optimal_order}
Let $f(\mathbf{x}_\sigma, \sigma) = a(\sigma)\mathbf{x}_\sigma + b(\sigma) v_\theta$ be any consistency function with $a, b \in C^1([0,1])$ satisfying $a(0)=1$, $b(0)=0$. Then $|b(\sigma)| = \mathcal{O}(\sigma)$ as $\sigma \to 0$, and this bound is attained if and only if $b'(0) \neq 0$.
\end{proposition}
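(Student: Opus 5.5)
The proof is an application of the mean value theorem (equivalently, a first-order Taylor expansion) to $b$ at $\sigma=0$. The first step is the upper bound. Because $b \in C^1([0,1])$ and $b(0)=0$, for every $\sigma \in (0,1]$ there is some $\xi_\sigma \in (0,\sigma)$ with $b(\sigma) = b(\sigma)-b(0) = b'(\xi_\sigma)\,\sigma$. Since $b'$ is continuous on the compact interval $[0,1]$, it is bounded, say by $M$. Hence $|b(\sigma)| \le M\sigma$ for all $\sigma$, which gives $|b(\sigma)| = \mathcal{O}(\sigma)$. The function $a$ and the condition $a(0)=1$ play no role in this step; they only guarantee that $f$ is a valid consistency function.

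The second step makes precise what ``attained'' means. I read it as saying that the linear rate is sharp, i.e.\ $|b(\sigma)| = \Theta(\sigma)$ as $\sigma\to 0$, or equivalently $\liminf_{\sigma\to0^+}|b(\sigma)|/\sigma > 0$. The key identity is
\begin{equation*}
\lim_{\sigma\to 0^+} \frac{b(\sigma)}{\sigma} = b'(0),
\end{equation*}
which holds by the definition of the one-sided derivative together with $b(0)=0$. Both directions of the equivalence follow from this limit.

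For sufficiency, suppose $b'(0)\neq 0$. Then for all sufficiently small $\sigma$ we have $|b(\sigma)| \ge \tfrac{1}{2}|b'(0)|\,\sigma$, so the bound is attained. For necessity, suppose $b'(0)=0$. Then $|b(\sigma)|/\sigma \to 0$, so $|b(\sigma)| = o(\sigma)$ and the linear rate is not attained. The LCM case illustrates this: there $b(\sigma) = \mathcal{O}(\sigma^2)$, as computed just before the statement.

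The only real subtlety is interpretive rather than technical. The statement's ``$\mathcal{O}(\sigma)$ is attained'' has to be formalized as a $\Theta(\sigma)$ lower bound, and I will state that reading explicitly so the ``only if'' direction has content. If one wants sharpness along every sequence $\sigma\to0$ and not just a $\liminf$ bound, the same limit identity delivers it, so no further work is needed. It is also worth noting why the result is optimal: no admissible $b$ can vanish more slowly than linearly, because $\sigma^{-1}|b(\sigma)|$ stays bounded. This justifies describing linear scaling as the best achievable behaviour under the boundary condition.
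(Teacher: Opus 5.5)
Your proof is correct and rests on the same first-order expansion of $b$ at $\sigma=0$ as the paper, with $\lim_{\sigma\to0^+}b(\sigma)/\sigma=b'(0)$ settling both directions of the equivalence. It is also slightly more careful than the paper's argument. Under the stated $C^1$ hypothesis the Taylor remainder is only $o(\sigma)$, so your conclusion $|b(\sigma)|=o(\sigma)$ when $b'(0)=0$ is what actually follows. The paper's claim of $\mathcal{O}(\sigma^2)$ in that case needs extra regularity such as $b\in C^2$: the $C^1$ function $b(\sigma)=\sigma^{3/2}$ has $b(0)=b'(0)=0$ but is not $\mathcal{O}(\sigma^2)$.
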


\begin{proof}
By Taylor's theorem at $\sigma = 0$ with $b(0) = 0$, $b(\sigma) = b'(0)\sigma + \mathcal{O}(\sigma^2)$, so $|b(\sigma)| \le |b'(0)| \sigma + \mathcal{O}(\sigma^2)$. The leading term vanishes iff $b'(0) = 0$, in which case $|b(\sigma)| = \mathcal{O}(\sigma^2)$.
\end{proof}


LCM falls in the suboptimal case $b'(0) = 0$. The proposition shows that any consistency function with $b'(0) \neq 0$ achieves linear scaling in the low-$\sigma$ regime, whereas LCM achieves only quadratic scaling.
We note that this obstruction is structural rather than parametric. For every choice of $\sigma_d$, the inequality $\sigma^2 \sigma_d / \sqrt{\sigma^2 + \sigma_d^2} \le \sigma$ holds, so no LCM-family member reaches the linear bound of Proposition~\ref{prop:optimal_order}. We therefore look outside the LCM family for the action-stream consistency function.

\subsection{Modality-Aware Consistency Functions}
\label{sec:adaptive}
\paragraph{Action stream.}
We now construct consistency functions matched to each modality's noise regime, beginning with the action stream.
The action stream concentrates training mass in the low-$\sigma$ regime where Proposition~\ref{prop:optimal_order} is decisive. The simplest pair $(a, b)$ satisfying $a(0)=1$, $b(0)=0$, and $b'(0) \neq 0$ is as follows:
\begin{equation}
    a(\sigma) = 1, \qquad b(\sigma) = -\sigma,
    \label{eq:action_choice}
\end{equation}
in which $b$ is exactly linear in $\sigma$ (no higher-order terms to dampen the gradient), $a$ is constant (the consistency target depends on $v_\theta$ uniformly across $\sigma$ rather than being shadowed by a varying skip term), and neither involves a tunable hyperparameter. The resulting consistency function for the action stream is
\begin{equation}
    f^a(\mathbf{x}^a_\sigma, \sigma) = 1\cdot \mathbf{x}^a_\sigma - \sigma\cdot v_\theta(\mathbf{x}^a_\sigma, \sigma).
    \label{eq:f_action}
\end{equation}
The boundary condition $f^a(\mathbf{x}^a_0, 0) = \mathbf{x}^a_0$ holds by construction, and the consistency property is enforced exactly as in standard consistency distillation~\citep{cd,lcm}. By design, $|b(\sigma)| = \sigma$ throughout $[0,1]$, achieving the linear scaling of Proposition~\ref{prop:optimal_order}. The derivation here clarifies its role within Flash-WAM as the canonical low-$\sigma$ realization of the consistency-function family, selected by the framework's matching principle rather than imported as a parametrization choice.

\paragraph{Video stream.}
The action-side selection criterion (linear scaling near $\sigma = 0$) does not apply to the video stream. With high $s^v$, the video distribution concentrates at large $\sigma$, where LCM already provides ample gradient signal. In this regime, Flash-WAM's selection criterion shifts to the high-$\sigma$ stability properties that the Karras parametrization~\citep{edm} provides:
\begin{itemize}
    \item \emph{Variance preservation.} The Karras parametrization keeps $\mathrm{Var}[f] \approx \sigma_d^2$ uniformly in $\sigma$, holding the network's effective input/output ranges stable. Under Eq.~\eqref{eq:action_choice}, $\mathrm{Var}[\mathbf{x}_\sigma - \sigma v_\theta]$ grows with $\sigma$, amplifying any prediction error by a factor of $\sigma$.
    \item \emph{Bounded output range.} At high noise, $c_{\text{out}} \to \sigma_d$ caps the output magnitude, while Eq.~\eqref{eq:action_choice} has no such bound and can drift outside the data manifold during early training.
\end{itemize}
For high-dimensional video latents these properties have a direct numerical impact; for low-dimensional, bounded action targets they are largely irrelevant. Flash-WAM therefore selects the LCM parameterization for the video stream:
\begin{equation}
    f^v(\mathbf{x}^v_\sigma, \sigma) = c_{\text{skip}}(\sigma)\, \mathbf{x}^v_\sigma + c_{\text{out}}(\sigma)\, \hat{\mathbf{x}}^v_0.
    \label{eq:f_video}
\end{equation}

\paragraph{Joint training objective.}
The student is trained to satisfy the consistency property in both modalities simultaneously. Each modality contributes a consistency loss using its own consistency function:
\begin{equation}
    \mathcal{L}^v = d\!\left(f^v_{\theta_S}(\mathbf{x}^v_{\sigma_s}, \sigma_s),\; f^v_{\theta_{S'}}(\tilde{\mathbf{x}}^v_{\sigma_e}, \sigma_e)\right),
    \qquad
    \mathcal{L}^a = d\!\left(f^a_{\theta_S}(\mathbf{x}^a_{\sigma_s}, \sigma_s),\; f^a_{\theta_{S'}}(\tilde{\mathbf{x}}^a_{\sigma_e}, \sigma_e)\right),
\end{equation}
where the teacher Euler step uses CFG with $w \sim \mathcal{U}[w_{\min}, w_{\max}]$ for video and the unguided prediction for action. The full Flash-WAM objective combines both:
\begin{equation}
    \mathcal{L} = \mathcal{L}^v + \lambda_a \, \mathcal{L}^a.
    \label{eq:full_loss}
\end{equation}
Both consistency targets are computed from a single forward pass per model: video and action tokens are concatenated into the joint sequence used in pre-training and processed by the shared transformer with flex attention. The modality-aware parameterization therefore affect only the per-stream loss heads, leaving the architecture and per-step compute cost unchanged from the teacher.

Flash-WAM's contribution lies in this principled selection: the per-modality parametrizations are well-known members of the consistency-function family, but the framework explains which to use where, and why.
\section{Experiments}
\label{experiment}

\begin{table}[t]
\centering
\caption{Success rates on RoboTwin 2.0 simulation (Clean and Randomized splits, 50 tasks) and speedup over the LingBot-VA as the \emph{teacher}. ``*'' indicates results we have reproduced.}
\resizebox{0.8\textwidth}{!}{%
\begin{tabular}{lcccccc}
\toprule
Method & $N^v$ & $N^a$ & Clean & Rand. & Average & Speedup \\
\midrule
$\pi_0$~\citep{pi0}                          & -- & -- & 65.92 & 58.40 & 62.2  & -- \\
$\pi_{0.5}$~\citep{pi05}                     & -- & -- & 82.74 & 76.76 & 79.8  & -- \\
X-VLA~\citep{xvla}                           & -- & -- & 72.9  & 72.8  & 72.8  & -- \\
Motus~\citep{motus}                          & -- & -- & 88.66 & 87.02 & 87.8  & -- \\
LingBot-VA*~\citep{lingbotva}                & 25 & 50 & 91.64 & 90.86 & 91.25 & 1.0$\times$ \\
\midrule
LingBot-VA + DMD2                            & 1 & 2 & 85.08              & 72.36  & 78.74    & \multirow{4}{*}{19.0$\times$}\\
LingBot-VA + Video-only LCM                  & 1 & 2 & 80.66           & 76.92  & 78.79 & \\
LingBot-VA + Naive Joint LCM                 & 1 & 2 & 25.88           & 22.07  & 23.97 & \\
\rowcolor{gray!15}
\textbf{Ours}                                & 1 & 2 & \textbf{88.42}  & \textbf{82.66} & \textbf{85.54} & \multicolumn{1}{>{\columncolor{white}}c}{} \\
\midrule
LingBot-VA + DMD2                            & 1 & 1 & 52.66     & 48.46     & 50.56    & \multirow{4}{*}{23.3$\times$}\\
LingBot-VA + Video-only LCM                  & 1 & 1 & 77.90  & 69.46  & 73.68 & \\
LingBot-VA + Naive Joint LCM                 & 1 & 1 & 39.68     & 32.96  & 36.32    & \\
\rowcolor{gray!15}
\textbf{Ours}                                & 1 & 1 & \textbf{82.56}  & \textbf{80.26} & \textbf{81.41} & \multicolumn{1}{>{\columncolor{white}}c}{} \\
\bottomrule
\end{tabular}%
}
\label{tab:main_results}
\end{table}

\begin{table}[t]
\centering
\caption{Success rates on LIBERO benchmarks (Spatial, Object, Goal, Long-horizon) and speedup over the LingBot-VA \emph{teacher}. ``*'' indicates results we have reproduced.}
\resizebox{0.9\textwidth}{!}{%
\begin{tabular}{lccccccccc}
\toprule
Method & $N^v$ & $N^a$ & Spatial & Object & Goal & Long & Average & Speedup \\
\midrule
$\pi_0$~\citep{pi0}                       & -- & -- & 96.8 & 98.8 & 95.8 & 85.2 & 94.1 & -- \\
X-VLA~\citep{xvla}                        & -- & -- & 98.2 & 98.6 & 97.8 & 97.6 & 98.1 & -- \\
LingBot-VA*~\citep{lingbotva}             & 20 & 50 & 98.5   & 99.8   & 98.0   & 98.3   & 98.6   & 1.0$\times$ \\
\midrule
LingBot-VA + Video-only LCM               & 1 & 2 & 95.1 & 92.0 & 96.0   & 97.8   & 95.2 & 13.7$\times$ \\
\rowcolor{gray!15}
\textbf{Ours}                             & 1 & 2 & \textbf{97.0} & \textbf{92.8} & \textbf{96.4} & \textbf{98.0} & \textbf{95.7} & 13.7$\times$ \\
\midrule
LingBot-VA + Video-only LCM               & 1 & 1 & 95.0 & 91.5 & 95.0   & 95.4  & 94.2 & 16.3$\times$ \\
\rowcolor{gray!15}
\textbf{Ours}                             & 1 & 1 & \textbf{96.0} & \textbf{92.6} & \textbf{96.0} & \textbf{95.8} & \textbf{95.1} & 16.3$\times$ \\
\bottomrule
\end{tabular}%
}
\label{tab:libero_results}
\end{table}

\subsection{Experimental Setup}
\label{sec:experimental_setup}

We apply our method to the released LingBot-VA model~\citep{lingbotva} (shared backbone version), a state-of-the-art, open-source world-action model whose parameter count is small enough for commodity edge deployment, where step distillation has the most practical impact. Other recent WAMs (Motus~\citep{motus}, DreamZero~\citep{dreamzero}) adopt different architectural formulations or integrate their own inference-optimization stacks at the architecture level, and fall outside the scope of our analysis. Per-chunk latency is measured on a single NVIDIA L40S GPU. Although no formal threshold exists for real-time chunked diffusion-based manipulation, we adopt $500$ ms (a $2$ Hz chunk-level rate) as our real-time budget, consistent with operating points reported in prior work~\citep{realtime, tidal} (Figure~\ref{fig:teaser}).

\paragraph{Benchmarks.}
We evaluate on two simulation benchmarks and a real-robot setup. RoboTwin 2.0~\citep{robotwin} is a bimanual manipulation benchmark covering $50$ tasks under two evaluation settings: a Clean split with fixed initial configurations, and a Randomized split where object poses, lighting, and scene layouts are perturbed at evaluation time to test robustness to distribution shift. LIBERO~\citep{libero} comprises four task suites---Spatial, Object, Goal, and Long-horizon---with $500$ demonstrations per suite. For real-world evaluation, we deploy on a Unitree G1 humanoid robot equipped with Unitree Dex1-1 grippers across three manipulation tasks: (T1) opening a pot's lid and placing a potato inside, (T2) picking a red bottle from a scene that also contains a yellow distractor bottle, and (T3) picking a pink object and placing it on a marked target location. We collect $50$ teleoperated demonstrations per task and report success rates over $10$ independent rollouts per task per method.

\paragraph{Baselines. }
We compare Flash-WAM against off-the-shelf step-distillation algorithms reimplemented for joint video-action generation models. \textbf{Naive joint LCM} applies the standard LCM consistency function~\citep{lcm} uniformly across video and action streams, serving as the direct counterpart to our method. \textbf{DMD2} adapts Distribution Matching Distillation~\citep{dmd2} to LingBot-VA's video stream, with a flow-matching regularizer on the action stream to stabilize action behavior under the distilled video. \textbf{Video-only LCM} distills only the video stream while leaving the action stream unchanged. We further report reference VLA baselines ($\pi_0$, $\pi_{0.5}$, X-VLA, Motus) for context on absolute task performance.
Full implementation details for our method and all baselines, including hyperparameters and training configurations, are provided in Appendix~\ref{app:implementation}.

\begin{figure}[t]
    \centering
    \includegraphics[width=\linewidth]{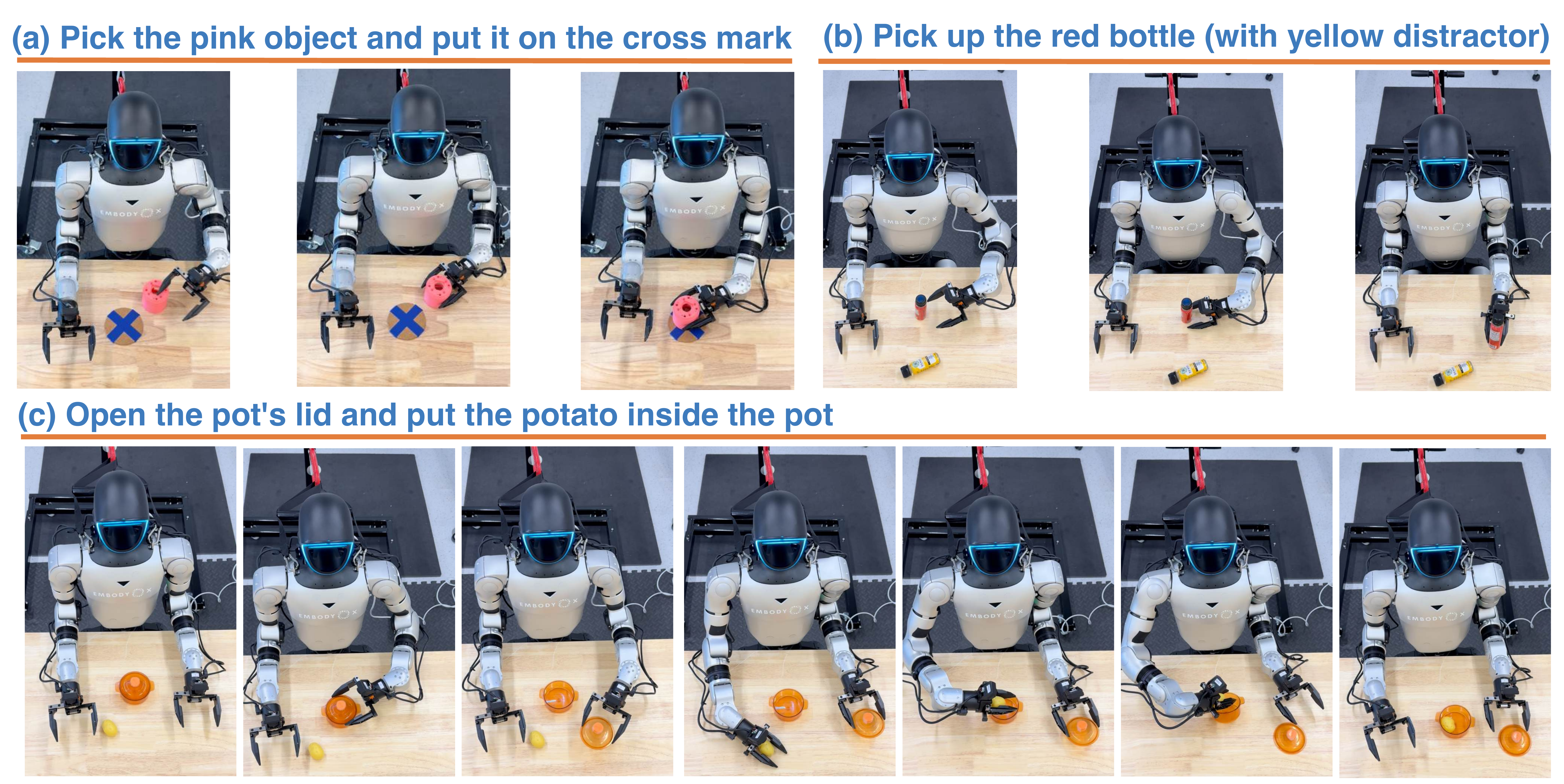}
    \vspace{-16pt} 
    \caption{Real-world evaluation suite on Unitree G1 humanoid robot.}
    \label{fig:qualitative}
\end{figure}

\begin{table}[t]
\centering
\caption{Real-world evaluation results on the humanoid Unitree G1.}
\label{tab:realworld}
\begin{tabular}{lccccc}
\toprule
Method & $N^v$ / $N^a$ & T1 & T2 & T3 & Average \\
\midrule
LingBot-VA~\citep{lingbotva}                  & $3$ / $10$ & 50\% & 70\% & 80\% & 66.7\% \\
\midrule
LingBot-VA (reduced NFE)                      & $1$ / $2$  & 30\% & 30\% & 60\% & 40.0\% \\
LingBot-VA + Video-only LCM                   & $1$ / $2$  & 30\% & 50\% & 50\% & 43.3\% \\
\rowcolor{gray!15}
\textbf{Flash-WAM}                            & $1$ / $2$  & \textbf{50\%} & \textbf{60\%} & \textbf{70\%} & \textbf{60.0\%} \\
\midrule
LingBot-VA (reduced NFE)                      & $1$ / $1$  & 10\% & 30\% & 30\% & 23.3\% \\
LingBot-VA + Video-only LCM                   & $1$ / $1$  & 20\% & 40\% & 40\% & 33.3\% \\
\rowcolor{gray!15}
\textbf{Flash-WAM}                            & $1$ / $1$  & \textbf{40\%} & \textbf{50\%} & \textbf{60\%} & \textbf{50.0\%} \\
\bottomrule
\end{tabular}
\end{table}

\subsection{Main Results}

\paragraph{RoboTwin.}
Table~\ref{tab:main_results} reports success rates on RoboTwin 2.0. Flash-WAM at $1$v/$2$a achieves $85.54\%$ average success, recovering most of LingBot-VA's $91.25\%$ at a $19\times$ speedup. At the more aggressive $1$v/$1$a configuration, Flash-WAM still achieves $81.41\%$ average success, within $10$ points of the unaccelerated configuration despite reducing video denoising by $25\times$ and action denoising by $50\times$. As shown in Figure~\ref{fig:teaser}, the corresponding $23.3\times$ speedup brings per-chunk latency down to $348$~ms on a single NVIDIA L40S, enabling real-time inference. Off-the-shelf distillation methods fall well short at the same NFE budgets. At $1$v/$2$a, naive joint LCM collapses to $23.97\%$, DMD2 reaches $78.74\%$, and video-only LCM trails at $78.79\%$. The pattern persists at $1$v/$1$a: naive joint LCM and DMD2 degrade further, while video-only LCM drops to $73.68\%$. Flash-WAM also surpasses the strongest VLA reference baselines ($\pi_0$, $\pi_{0.5}$, X-VLA) and remains competitive with Motus.

\paragraph{LIBERO.}
Table~\ref{tab:libero_results} reports success rates on LIBERO across its four task suites (Spatial, Object, Goal, Long-horizon). Flash-WAM at $1$v/$2$a achieves $95.7\%$ average success, recovering nearly all of the LingBot-VA teacher's $98.6\%$ at a $13.7\times$ speedup, and outperforms Video-only LCM on every suite. At $1$v/$1$a, Flash-WAM achieves $95.1\%$ average success at a $16.3\times$ speedup, reducing per-chunk latency from $6{,}767$~ms to $404$~ms on NVIDIA L40S and crossing the real-time control budget.

\paragraph{Real-World Experiments.}
Table~\ref{tab:realworld} reports real-world success rates. The released LingBot-VA model is deployed at $3$v/$10$a and achieves $66.7\%$ average success. Reducing the NFE of LingBot-VA without distillation collapses real-world performance to $40.0\%$ at $1$v/$2$a and $23.3\%$ at $1$v/$1$a, with the pot-and-potato task hit hardest. Applying LCM naively (Video-only LCM) partially recovers performance ($43.3\%$ at $1$v/$2$a, $33.3\%$ at $1$v/$1$a), but Flash-WAM substantially outperforms both, reaching $60.0\%$ average at $1$v/$2$a and $50.0\%$ at $1$v/$1$a. The pattern is consistent across all three tasks and both NFE configurations, with the largest absolute gains on the tasks most affected by reduced denoising (T1 at $1$v/$1$a: $10\% \to 40\%$).

\begin{figure}[t]
    \centering
    \includegraphics[width=\linewidth]{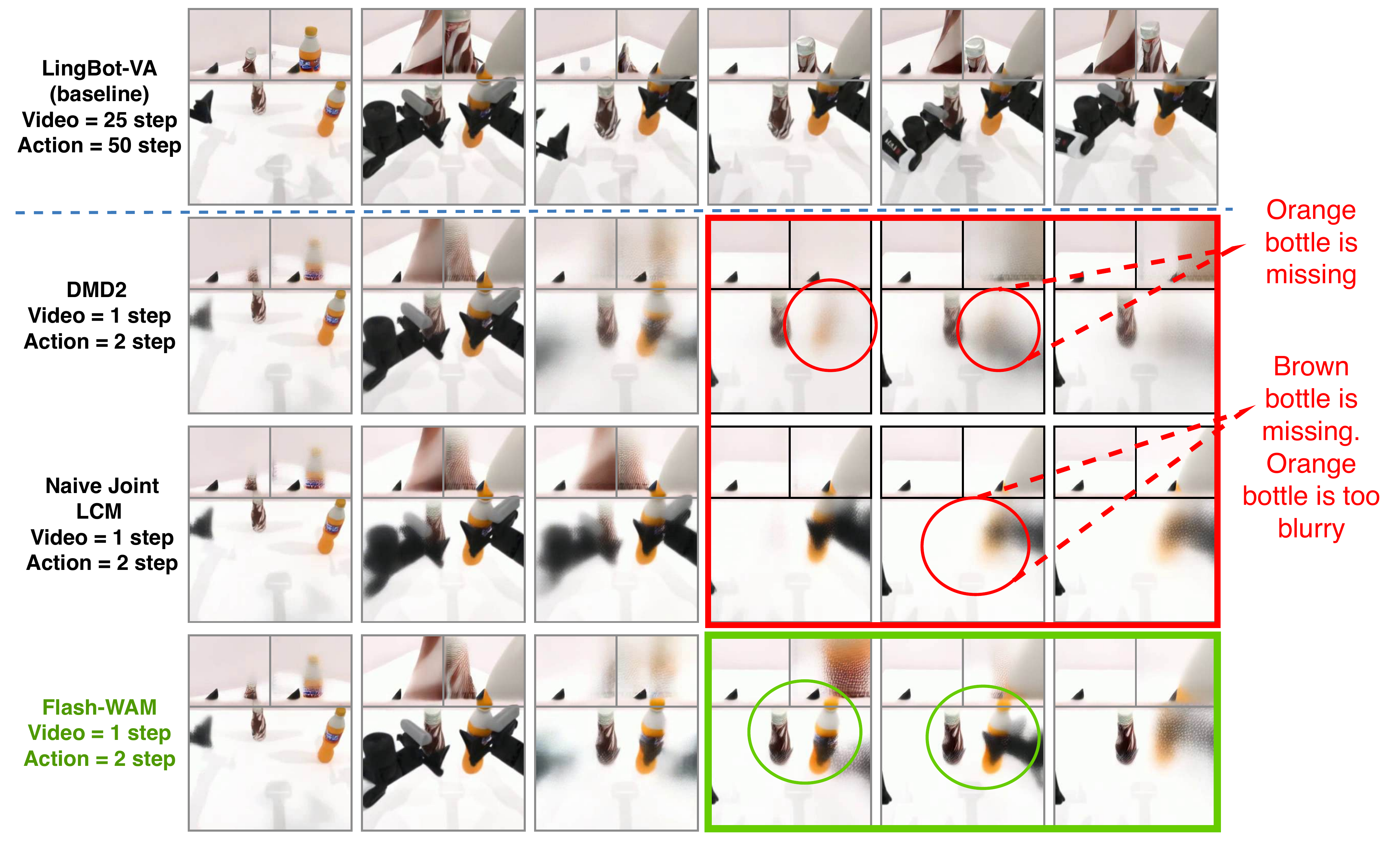}
    \vspace{-16pt} 
    \caption{Qualitative comparison on RoboTwin task ``pick\_diverse\_bottles'', generated with open-loop setting. }
    \label{fig:qualitative}
\end{figure}

\paragraph{Qualitative analysis.}
Figure~\ref{fig:qualitative} shows representative frames of video predictions from an open-loop autoregressive rollout on a RoboTwin Clean-split task, in which the model generates all subsequent video chunks without intermediate observation feedback. The unaccelerated LingBot-VA teacher (25v/50a) produces clean predictions with object identity and gripper geometry preserved throughout. Both off-the-shelf distillation baselines (naive joint LCM and DMD2) degrade visibly under the same $1$v/$2$a NFE budget as our method: the brown bottle disappears entirely under naive joint LCM and becomes blurred under DMD2. 
Our method preserves recognizable scene structure and object identity across the rollout. We emphasize that this figure illustrates video-prediction quality only; action precision is captured quantitatively in the success-rate tables.

\subsection{Ablation Analysis}
\label{sec:ablation}
Table~\ref{tab:ablation_robotwin} compares Flash-WAM against three alternative LCM-based distillation strategies on RoboTwin: distilling both modalities uniformly (Naive joint LCM), distilling only the video stream (Video-only LCM), or distilling video while anchoring action behavior with an MSE regularizer (Video-only LCM + reg.). Across both NFE configurations and all task horizons, Flash-WAM outperforms every alternative on Clean and Randomized splits. Naive joint LCM collapses entirely, dropping to $25.88\%$ (Clean split) at $1$v/$2$a with near-zero success at horizons 2 and 3, confirming the analysis of Section~\ref{sec:naive_fails}. Video-only LCM avoids this collapse by leaving the action stream at full teacher NFE, but still trails Flash-WAM by roughly $7$ points on average at $1$v/$2$a, showing that proper distillation of the action stream is necessary rather than optional. Adding an MSE regularizer recovers $6$ points over plain video-only LCM at $1$v/$2$a but degrades at the more aggressive $1$v/$1$a configuration, falling $24$ points below plain video-only LCM: an auxiliary loss cannot substitute for distilling the action stream when the action NFE budget is tight. Flash-WAM's modality-aware parametrization is therefore the only strategy that preserves teacher-level accuracy across both NFE configurations and across horizons.

\begin{table}[t]
\centering
\caption{Ablation analysis on RoboTwin 2.0. Comparing four LCM-based distillation strategies against the original LingBot-VA at two NFE configurations, broken down by task horizon (1, 2, 3 sequential steps).}
\label{tab:ablation_robotwin}
\resizebox{\textwidth}{!}{%
\begin{tabular}{lcccccccccc}
\toprule
& & & \multicolumn{2}{c}{Horizon = 1} & \multicolumn{2}{c}{Horizon = 2} & \multicolumn{2}{c}{Horizon = 3} & \multicolumn{2}{c}{Average (50 tasks)} \\
\cmidrule(lr){4-5}\cmidrule(lr){6-7}\cmidrule(lr){8-9}\cmidrule(lr){10-11}
Method & $N^v$ & $N^a$ & Clean & Rand. & Clean & Rand. & Clean & Rand. & Clean & Rand. \\
\midrule
LingBot-VA teacher        & 25 & 50 & 94.18 & 93.56 & 90.35 & 86.95 & 93.22 & 93.28 & 92.93 & 91.55 \\
\midrule
Video-only LCM            & 1 & 2 & 87.10    & 82.73    & 73.13    & 68.19    & 62.50    & \underline{68.25}    & 80.66    & 76.92    \\
Video-only LCM + reg.     & 1 & 2 & \underline{91.53}    & \underline{88.50}    & \underline{83.00}    & \underline{74.69}    & \underline{68.00}    & 62.75    & \underline{86.92}    & \underline{82.02}    \\
Naive Joint LCM           & 1 & 2 & 41.00 & 35.13 & 4.00  & 3.13  & 0.00  & 0.00  & 25.88 & 20.08 \\
\rowcolor{gray!15}
\textbf{Flash-WAM}             & 1 & 2 & \textbf{92.30} & \textbf{88.47} & \textbf{84.88} & \textbf{76.63} & \textbf{73.50} & \textbf{63.25} & \textbf{88.42} & \textbf{82.66} \\
\midrule
Video-only LCM            & 1 & 1 & \underline{85.57} & \underline{78.17} & \underline{72.06} & \underline{61.81} & \underline{43.75} & \underline{34.75} & \underline{77.90} & \underline{69.46} \\
Video-only LCM + reg.     & 1 & 1 & 66.87 & 61.07 & 39.19 & 35.56 & 10.25 & 4.75  & 53.48 & 48.40 \\
Naive Joint LCM           & 1 & 1 & 54.63    & 46.00    & 21.56    & 15.63    & 0.00   & 0.00    & 39.68    & 32.96    \\
\rowcolor{gray!15}
\textbf{Flash-WAM}             & 1 & 1 & \textbf{87.30} & \textbf{86.93} & \textbf{78.44} & \textbf{72.63} & \textbf{63.50} & \textbf{60.75} & \textbf{82.56} & \textbf{80.26} \\
\bottomrule
\end{tabular}%
}
\end{table}

\section{Conclusion}
\label{sec:conclusion}
We introduced Flash-WAM, a step-distillation framework for joint video-action diffusion models. Our analysis identifies a structural failure mode in off-the-shelf consistency distillation. Asymmetric per-modality noise schedules cause the two streams to reach the distillation loss in different regimes, where a single consistency function cannot serve both. Flash-WAM resolves this by selecting different members of the consistency-function family for each modality, matched to its noise regime. Instantiated on LingBot-VA, the framework recovers near-original task success ($85.5\%$ on RoboTwin 2.0 and $95.7\%$ on LIBERO) at $19{\times}$ speedup, and reaches $23{\times}$ speedup at a single step with real-time per-chunk latency. Real-world experiments on a Unitree G1 humanoid robot confirm this trend, with Flash-WAM achieving $60\%$ across three manipulation tasks, substantially outperforming both reduced-NFE inference without distillation ($40\%$) and Video-only LCM ($43.3\%$) at the same step budget.

\newpage
\bibliographystyle{plainnat}
\bibliography{reference}

\newpage
\appendix
\section{Implementation Details}
\label{app:implementation}
We provide full implementation details for our method and all baselines. Section~\ref{app:finetuning} describes the fine-tuning phase in which the released LingBot-VA base checkpoint is adapted to each LIBERO suite prior to distillation. Section~\ref{app:flashwam_hyperparams} reports the distillation hyperparameters used to train Flash-WAM, which are largely shared between the LIBERO and RoboTwin experiments. Section~\ref{app:baselines} then specifies the implementation choices for each baseline, organized by distillation family. All training is performed on NVIDIA H100 GPUs; evaluation and latency profiling are performed on NVIDIA L40S GPUs.

\subsection{Libero Finetuning}
\label{app:finetuning}
The released LingBot-VA checkpoint is a base model trained on multi-task data~\citep{lingbotva}. Following the LingBot-VA protocol, we first fine-tune this base checkpoint separately on each LIBERO suite for $4{,}000$ training steps before applying step distillation. The fine-tuning hyperparameters are listed in Table~\ref{tab:hyperparams_finetune}. Fine-tuning on each suite will take about 24 hours on 4 H100s.
\begin{table}[H]
\centering
\caption{Hyperparameters used to fine-tune the LingBot-VA base checkpoint on each LIBERO suite.}
\label{tab:hyperparams_finetune}
\small
\begin{tabular}{ll}
\toprule
\textbf{Hyperparameter} & \textbf{Value} \\
\midrule
\multicolumn{2}{l}{\emph{Optimization}} \\
\quad Optimizer                          & AdamW, $(\beta_1, \beta_2) = (0.9, 0.95)$ \\
\quad Learning rate                      & $1 \times 10^{-5}$ \\
\quad Weight decay                       & $0.1$ \\
\quad Warmup steps                       & $10$ (linear warmup, then constant) \\
\quad Gradient clipping                  & $2.0$ \\
\midrule
\multicolumn{2}{l}{\emph{Batching}} \\
\quad Per-device batch size              & $1$ \\
\quad Gradient accumulation steps        & $30$ \\
\quad Effective batch size               & $120$ ($4 \times$ H100) \\
\midrule
\multicolumn{2}{l}{\emph{Training schedule}} \\
\quad Total training steps               & $4{,}000$ \\
\bottomrule
\end{tabular}
\end{table}

\subsection{Flash-WAM Distillation Hyperparameters}
\label{app:flashwam_hyperparams}

Starting from the fine-tuned LingBot-VA teacher, we apply Flash-WAM for $2{,}000$ steps on each LIBERO suite using the hyperparameters in Table~\ref{tab:hyperparams_libero}. Each suite takes approximately 24 hours on $4 \times$ H100 GPUs. The same training procedure is applied to all LCM-based baselines (Naive joint LCM, Video-only LCM, Video-only LCM + reg.) on RoboTwin for fair comparison.
\begin{table}[t]
\centering
\caption{Hyperparameters used to train our distilled student on LIBERO. The same configuration is used across all four LIBERO suites.}
\label{tab:hyperparams_libero}
\small
\begin{tabular}{ll}
\toprule
\textbf{Hyperparameter} & \textbf{Value} \\
\midrule
\multicolumn{2}{l}{\emph{Architecture}} \\
\quad Image resolution                & $128 \times 128$ \\
\quad Action dimension                & $30$ \\
\quad Actions per video frame         & $4$ \\
\quad Frame chunk size $K$            & $4$ \\
\midrule
\multicolumn{2}{l}{\emph{Flow matching}} \\
\quad Video SNR shift $s^v$           & $5.0$ \\
\quad Action SNR shift $s^a$          & $1.0$ \\
\midrule
\multicolumn{2}{l}{\emph{Consistency distillation}} \\
\quad Action loss weight $\lambda_a$  & $1.0$ \\
\quad Action regularizer weight $\lambda_r$ & $0.2$ \\
\quad EMA decay $\alpha$              & $0.995$ \\
\quad Data scale $\sigma_d$           & $0.5$ \\
\quad Loss type                       & Huber ($c = 0.001$) \\
\quad CFG range $[w_{\min}, w_{\max}]$ & $[2.0,\, 10.0]$ \\
\midrule
\multicolumn{2}{l}{\emph{Optimization}} \\
\quad Optimizer                       & AdamW, $(\beta_1, \beta_2) = (0.9, 0.999)$ \\
\quad Learning rate                   & $5 \times 10^{-6}$ \\
\quad Gradient clipping               & $2.0$ \\
\quad Warmup steps                    & $100$ \\
\quad Effective batch size            & $48$ ($4 \times$ H100) \\
\bottomrule
\end{tabular}
\end{table}

\subsection{Baseline Implementations}
\label{app:baselines}
We describe each baseline's specific implementation choices, organized by distillation family. All baselines share the training data, base checkpoint, and number of training iterations with our method (Section~\ref{app:flashwam_hyperparams}); they differ only in the distillation objective applied.

\subsubsection{Naive Joint LCM}
Naive Joint LCM applies the standard LCM consistency function~\citep{lcm} uniformly across video and action streams. The consistency function takes the form $f(\mathbf{x}_\sigma, \sigma) = c_{\text{skip}}(\sigma) \mathbf{x}_\sigma + c_{\text{out}}(\sigma) \hat{\mathbf{x}}_0$ with $c_{\text{skip}} = \sigma_d^2 / (\sigma^2 + \sigma_d^2)$ and $c_{\text{out}} = \sigma \sigma_d / \sqrt{\sigma^2 + \sigma_d^2}$. The consistency loss is computed independently for each modality and combined as $\mathcal{L} = \mathcal{L}^v + \lambda_a \mathcal{L}^a$ with $\lambda_a = 1.0$, identical to Flash-WAM's joint training objective. The only difference from Flash-WAM is that the action stream uses the same LCM parametrization as video rather than the linear-scaling parametrization.

\subsubsection{Video-only LCM}
Video-only LCM distills only the video stream while leaving the action stream unchanged at full teacher NFE during inference. During training, the consistency loss is computed only on the video stream. The action stream is not modified during distillation; at inference, the distilled student handles the video forward pass at the reduced NFE while the action stream is denoised at the teacher's full $50$-step schedule. All other hyperparameters match the Flash-WAM configuration in Table~\ref{tab:hyperparams_libero}.

\subsubsection{Video-only LCM + reg}
Video-only LCM + reg extends Video-only LCM by adding a flow-matching regularizer on the action stream during distillation, allowing both streams to operate at reduced NFE at inference time. The video stream is supervised by the standard LCM consistency loss as in Video-only LCM. The action stream is supervised by an MSE flow-matching loss anchored against the demonstration distribution. Specifically, given a clean ground-truth action $\mathbf{x}_0^a$ and a noise level $\sigma$ sampled from the action stream's schedule, the action input is constructed as $\mathbf{x}_\sigma^a = (1 - \sigma)\, \mathbf{x}_0^a + \sigma\, \boldsymbol{\epsilon}^a$ with $\boldsymbol{\epsilon}^a \sim \mathcal{N}(\mathbf{0}, \mathbf{I})$, and the regularizer supervises the student's action velocity prediction against the target velocity:
\begin{equation}
    v^\star = \frac{\mathbf{x}_\sigma^a - \mathbf{x}_0^a}{\sigma}.
\end{equation}
The action regularizer takes the form
\begin{equation}
    \mathcal{L}_{\text{reg}}^a = \frac{1}{|\mathcal{M}|} \big\| \mathcal{M} \odot \big(v_{\theta_S}^a - v^\star\big) \big\|^2,
\end{equation}
where $\mathcal{M}$ is the per-channel action validity mask. The full training objective combines the video consistency loss with the action regularizer:
\begin{equation}
    \mathcal{L} = \mathcal{L}^v_{\text{LCM}} + \lambda_r\, \mathcal{L}_{\text{reg}}^a.
\end{equation}
All other hyperparameters match the Flash-WAM configuration in Table~\ref{tab:hyperparams_libero}.

\subsubsection{DMD2 Baseline Implementations}
\label{app:dmd2}

DMD2~\citep{dmd2} was originally proposed for single-modality image and video distillation. Adapting it to the joint video-action diffusion regime requires several architectural and training decisions that the original method does not prescribe. We describe our adaptation choices in this section to make our two DMD2 baselines reproducible: \textbf{Video-only DMD2 + reg} (used in the main results, Table~\ref{tab:main_results}) and \textbf{Joint DMD2} (the fully-joint variant evaluated in Appendix~\ref{app:additional}).

Throughout, we use the following notation: $\theta_T$ for the unaccelerated LingBot-VA model, $\theta_S$ for the student, and $\theta_C$ for the critic. We denote the student's final output as $G_{\theta_S}(z, y) = \hat{\mathbf{x}}_0$, where $z$ is the random seed and $y$ is the conditioning context.

\paragraph{Networks.}
We maintain three full-copy networks of the joint video-action backbone:
\begin{itemize}
    \item \emph{Reference model} $\theta_T$ (frozen): the pretrained LingBot-VA model. Defines the real score.
    \item \emph{Student} $\theta_S$ (trainable): a $K$-step generator initialized from $\theta_T$.
    \item \emph{Critic} $\theta_C$ (trainable): tracks the student's joint generation distribution. Defines the fake score.
\end{itemize}
A single critic scores both modalities through the shared backbone with separate output heads (the same head structure as the reference model).

\paragraph{Variant overview.}
The two DMD2 baselines differ in two dimensions: which modalities are generated from noise during student rollouts, and which losses contribute to the student objective. Joint DMD2 (appendix variant): The student generates both video and action from pure noise via $K$-step denoising. Distribution-matching losses are applied to both modalities. There is no action regularizer.
Video-only DMD2 + reg (main paper variant): The student generates only the video stream from pure noise. The action stream input is constructed by perturbing the ground-truth action at each noise level rather than being denoised from noise. The student is supervised by distribution-matching on the video stream and an MSE-based action regularizer on the action stream.

The networks, scoring procedure, and critic objective are shared between the two variants. The variant-specific differences are flagged where they apply in the descriptions below.

\paragraph{Student rollout.}
The student denoises in $K$ steps (we use $K=4$) following a uniform noise-band schedule $1 = \sigma_0 > \sigma_1 > \cdots > \sigma_K = 0$. At each step $i$, the student computes the velocity prediction
\begin{equation}
    v_{\theta_S}\!\big(\mathbf{x}_{\sigma_i}^v, \mathbf{x}_{\sigma_i}^a, \tilde{\mathbf{x}}_0^v,\, \tilde{\mathbf{x}}_0^a,\, \sigma_i,\, y\big),
\end{equation}
from which the predicted clean output for video is recovered as $\hat{\mathbf{x}}_0^{v,(i)} = \mathbf{x}_{\sigma_i}^v - \sigma_i\, v_{\theta_S}^v$. Clean-context tokens $(\tilde{\mathbf{x}}_0^v, \tilde{\mathbf{x}}_0^a)$ supply past-frame context through flex attention, exactly as in pretraining (block-causal across chunks; strict causality from noisy to clean tokens). The video input at the next step is constructed by re-noising the predicted clean video:
\begin{equation}
    \mathbf{x}_{\sigma_{i+1}}^v = (1-\sigma_{i+1})\, \hat{\mathbf{x}}_0^{v,(i)} + \sigma_{i+1}\, \boldsymbol{\epsilon}^{v,(i+1)}, \qquad \boldsymbol{\epsilon}^{v,(i+1)} \sim \mathcal{N}(\mathbf{0}, \mathbf{I}).
\end{equation}

The action stream input depends on the variant. For Joint DMD2, the action stream is denoised symmetrically with the video stream:
\begin{equation}
    \mathbf{x}_{\sigma_{i+1}}^a = (1-\sigma_{i+1})\, \hat{\mathbf{x}}_0^{a,(i)} + \sigma_{i+1}\, \boldsymbol{\epsilon}^{a,(i+1)},
\end{equation}
where $\hat{\mathbf{x}}_0^{a,(i)} = \mathbf{x}_{\sigma_i}^a - \sigma_i\, v_{\theta_S}^a$. For Video-only DMD2 + reg, the action stream input at each step is constructed by directly perturbing the ground-truth action at the corresponding noise level:
\begin{equation}
    \mathbf{x}_{\sigma_i}^a = (1 - \sigma_i)\, \mathbf{x}_0^a + \sigma_i\, \boldsymbol{\epsilon}^{a,i}, \qquad \boldsymbol{\epsilon}^{a,i} \sim \mathcal{N}(\mathbf{0}, \mathbf{I}),
\end{equation}
with $\mathbf{x}_0^a$ the ground-truth action. To bound activation memory, only the final step ($i = K-1$) retains autograd; steps $0, \ldots, K-2$ run under \texttt{no\_grad}.

\paragraph{Single-pass joint scoring.}
Given a student rollout output, both modalities are re-noised at independently sampled noise levels $\sigma^v, \sigma^a \sim \mathcal{U}(0.02, 0.98)$ and presented jointly to both the critic and the reference model:
\begin{equation}
    \tilde{\mathbf{x}}^v = (1 - \sigma^v)\, \hat{\mathbf{x}}_0^v + \sigma^v\, \boldsymbol{\eta}^v.
\end{equation}
For Joint DMD2, the action input to scoring is constructed analogously by re-noising the student's predicted action $\hat{\mathbf{x}}_0^a$. For Video-only DMD2 + reg, the action input to scoring is constructed by perturbing the ground-truth action $\mathbf{x}_0^a$ at noise level $\sigma^a$ directly. In both variants, the critic and reference model produce their predictions in a single forward pass on the joint input $(\tilde{\mathbf{x}}^v, \tilde{\mathbf{x}}^a)$.

The fake and real predicted clean outputs for video are
\begin{equation}
    \hat{\mathbf{x}}_0^{v,\text{fake}} = \tilde{\mathbf{x}}^v - \sigma^v\, v_{\theta_C}^v\!\big(\tilde{\mathbf{x}}^v, \tilde{\mathbf{x}}^a, \sigma^v, \sigma^a, y\big),
\end{equation}
\begin{equation}
    \hat{\mathbf{x}}_0^{v,\text{real}} = \tilde{\mathbf{x}}^v - \sigma^v\, v_{\theta_T}^{v,\text{cfg}}\!\big(\tilde{\mathbf{x}}^v, \tilde{\mathbf{x}}^a, \sigma^v, \sigma^a, y\big),
\end{equation}
with classifier-free guidance applied only to the video real score:
\begin{equation}
    v_{\theta_T}^{v,\text{cfg}} = v_{\theta_T}^v(\cdot \mid \varnothing) + w^v\, \big[v_{\theta_T}^v(\cdot \mid y) - v_{\theta_T}^v(\cdot \mid \varnothing)\big], \qquad w^v = 3.0.
\end{equation}
For Joint DMD2, the analogous fake and real action outputs are computed from the critic and reference model's action heads (without CFG): $\hat{\mathbf{x}}_0^{a,\text{real}} = \tilde{\mathbf{x}}^a - \sigma^a\, v_{\theta_T}^a(\cdot \mid y)$. Total scoring cost per joint sample is three forward passes (one critic, two reference: conditioned and unconditioned).

\paragraph{Distribution-matching losses.}
The DMD2 distribution-matching gradient pushes the student's output distribution toward the reference distribution. For video,
\begin{equation}
    \mathcal{L}_{\text{DM}}^v = \tfrac{1}{2} \big\| G_{\theta_S}^v(z, y) - \mathrm{sg}\!\big[G_{\theta_S}^v(z, y) - g^v\big] \big\|^2,
\end{equation}
where the gradient surrogate is
\begin{equation}
    g^v = \frac{\hat{\mathbf{x}}_0^{v,\text{fake}} - \hat{\mathbf{x}}_0^{v,\text{real}}}{\big\| \hat{\mathbf{x}}_0^v - \hat{\mathbf{x}}_0^{v,\text{real}} \big\|_1 + \varepsilon},
\end{equation}
with $\varepsilon = 10^{-8}$ and per-sample $L_1$ normalization following the DMD2 gradient-norm fix. For Joint DMD2, an analogous distribution-matching loss is applied to the action stream:
\begin{equation}
    g^a = \frac{\hat{\mathbf{x}}_0^{a,\text{fake}} - \hat{\mathbf{x}}_0^{a,\text{real}}}{\big\| \hat{\mathbf{x}}_0^a - \hat{\mathbf{x}}_0^{a,\text{real}} \big\|_1 + \varepsilon},
\end{equation}
\begin{equation}
    \mathcal{L}_{\text{DM}}^a = \frac{1}{|\mathcal{M}|} \big\| \mathcal{M} \odot \big(G_{\theta_S}^a - \mathrm{sg}\!\big[G_{\theta_S}^a - g^a\big]\big) \big\|^2,
\end{equation}
where $\mathcal{M}$ is the per-channel action validity mask. For Video-only DMD2 + reg, no distribution-matching loss is applied to the action stream.

\paragraph{Action regularizer (Video-only DMD2 + reg only).}
For the Video-only DMD2 + reg variant, we anchor the student's action head to the demonstration distribution with a flow-matching loss evaluated on the action input at the student's final generation step. Let $\mathbf{x}_{\sigma_{K-1}}^a$ denote the action input at step $K-1$ (the perturbed ground-truth action). The target velocity that maps this input to the clean ground-truth action is
\begin{equation}
    v^\star = \frac{\mathbf{x}_{\sigma_{K-1}}^a - \mathbf{x}_0^a}{\sigma_{K-1}}.
\end{equation}
We supervise the student's final-step action prediction against $v^\star$:
\begin{equation}
    \mathcal{L}_{\text{reg}}^a = \frac{1}{|\mathcal{M}|} \big\| \mathcal{M} \odot \big(v_{\theta_S}^a - v^\star\big) \big\|^2.
\end{equation}

\paragraph{Critic objective.}
The critic is trained to denoise the student's joint distribution via flow matching. For each rollout, fresh noise levels $\sigma^{v\prime}, \sigma^{a\prime}$ and noise $\boldsymbol{\eta}^{v\prime}, \boldsymbol{\eta}^{a\prime}$ are drawn, the student's joint output is re-noised, and the critic minimizes
\begin{equation}
    \mathcal{L}_{\text{critic}} = w(\sigma^{v\prime}) \big\| v_{\theta_C}^v - (\boldsymbol{\eta}^{v\prime} - \hat{\mathbf{x}}_0^v) \big\|^2 + w(\sigma^{a\prime}) \big\| \mathcal{M} \odot \big[v_{\theta_C}^a - (\boldsymbol{\eta}^{a\prime} - \hat{\mathbf{x}}_0^a)\big] \big\|^2,
\end{equation}
where $w(\sigma) = \exp\!\big(-2((\sigma - 0.5)/T)^2\big)$ is a bell-curve timestep weight inherited from pretraining. The critic and student updates share a single forward pass through the joint input.

The two DMD2 baselines combine the loss terms above as follows.

Joint DMD2:
\begin{equation}
    \mathcal{L}_{\theta_S}^{\text{joint}} = \lambda_{\text{DM}}^v\, \mathcal{L}_{\text{DM}}^v + \lambda_{\text{DM}}^a\, \mathcal{L}_{\text{DM}}^a.
\end{equation}

Video-only DMD2 + reg:
\begin{equation}
    \mathcal{L}_{\theta_S}^{\text{V-only}} = \lambda_{\text{DM}}^v\, \mathcal{L}_{\text{DM}}^v + \lambda_{\text{reg}}\, \mathcal{L}_{\text{reg}}^a.
\end{equation}

We use $\lambda_{\text{DM}}^v = 1.0$, $\lambda_{\text{DM}}^a = 0.1$, $\lambda_{\text{reg}} = 1.0$. Following DMD2's update schedule, the critic updates every iteration while the student updates every $T_g = 5$ iterations.

\paragraph{Hyperparameters.}
We use AdamW with $\beta_1 = 0.9$, $\beta_2 = 0.999$, $\varepsilon = 10^{-8}$, no weight decay, gradient clipping $\|g\|_2 \le 2.0$, and a $100$-step linear warmup to constant learning rate. The student learning rate is $5 \times 10^{-7}$ and the critic learning rate is $10^{-6}$. Both DMD2 variants are trained on $4 \times$ NVIDIA H100 GPUs for $2{,}000$ steps, matching the LCM-based training schedule.

\section{Additional Experimental Results}
\label{app:additional}

\begin{table}[t]
\centering
\caption{Ablation analysis on RoboTwin 2.0. We compare four LCM-based distillation strategies against the unaccelerated LingBot-VA teacher at two NFE configurations, broken down by task horizon (1, 2, 3 sequential steps) and averaged across all 50 tasks.}
\label{tab:ablation_robotwin_app}
\resizebox{\textwidth}{!}{%
\begin{tabular}{lcccccccccc}
\toprule
& & & \multicolumn{2}{c}{Horizon = 1} & \multicolumn{2}{c}{Horizon = 2} & \multicolumn{2}{c}{Horizon = 3} & \multicolumn{2}{c}{Average (50 tasks)} \\
\cmidrule(lr){4-5}\cmidrule(lr){6-7}\cmidrule(lr){8-9}\cmidrule(lr){10-11}
Method & $N^v$ & $N^a$ & Clean & Rand. & Clean & Rand. & Clean & Rand. & Clean & Rand. \\
\midrule
Joint DMD2           & 1 & 1 & 67.50 & 62.13 & 36.50 & 33.81 & 6.00 & 4.50 & 52.66 & 48.46 \\
DMD2 + reg.           & 1 & 1 & 76.93 & 72.23 & 54.81 & 50.25 & 25.00 & 11.25 & 66.53 & 60.32 \\
Video-only LCM            & 1 & 1 & \underline{85.57} & \underline{78.17} & \underline{72.06} & \underline{61.81} & \underline{43.75} & \underline{34.75} & \underline{77.90} & \underline{69.46} \\
Video-only LCM + reg.     & 1 & 1 & 66.87 & 61.07 & 39.19 & 35.56 & 10.25 & 4.75  & 53.48 & 48.40 \\
Naive joint LCM           & 1 & 1 & 54.63    & 46.00    & 21.56    & 15.63    & 0.00   & 0.00    & 39.68    & 32.96    \\
\rowcolor{gray!15}
\textbf{Flash-WAM}             & 1 & 1 & \textbf{87.30} & \textbf{86.93} & \textbf{78.44} & \textbf{72.63} & \textbf{63.50} & \textbf{60.75} & \textbf{82.56} & \textbf{80.26} \\
\bottomrule
\end{tabular}%
}
\end{table}

Table~\ref{tab:ablation_robotwin_app} reports a comprehensive comparison of all distillation strategies at the most aggressive single-step configuration ($1$v/$1$a) on RoboTwin 2.0. The table includes Joint DMD2 (Section~\ref{app:dmd2}), the fully-joint DMD2 variant excluded from the main paper. Joint DMD2 reaches $52.7\%$ Clean and $48.5\%$ Randomized on average, falling roughly $14$ points below the Video-only DMD2 + reg variant reported in the main results and $30$ points below Flash-WAM. The pattern is consistent with the diagnosis of Section~\ref{sec:naive_fails}: like LCM, naively applying DMD2 uniformly across both modalities cannot serve the asymmetric noise regimes that joint video-action models impose. Restricting DMD2 to the video stream and anchoring action behavior with a regularizer (the Video-only DMD2 + reg variant) substantially improves performance over the joint variant, but still trails Flash-WAM by a wide margin.

\begin{table}
\centering
\caption{Per-task success rate results on Robotwin 2.0.}
\label{tab:robotwin_full}
\resizebox{\textwidth}{!}{%
\begin{tabular}{lc cc cc cc cc}
\toprule
\multirow{2}{*}{Simulation Task} & \multirow{2}{*}{Horizon} & \multicolumn{2}{c}{Flash-WAM (1v/2a)} & \multicolumn{2}{c}{Flash-WAM(1v/1a)} & \multicolumn{2}{c}{Naive Joint LCM(1v/1a)} & \multicolumn{2}{c}{DMD2 (1v/1a)} \\
\cmidrule(lr){3-4} \cmidrule(lr){5-6} \cmidrule(lr){7-8} \cmidrule(lr){9-10}
& & Clean & Rand. & Clean & Rand. & Clean & Rand. & Clean & Rand. \\
\midrule
Adjust Bottle              & 1 &  98 &  98 &  98 &  98 &  93 &  88 &  98 &  85  \\
Beat Block Hammer          & 1 &  98 &  95 &  99 &  97 &  61 &  20 &  77 &  74  \\
Blocks Ranking RGB         & 3 &  86 &  85 &  73 &  74 &   0 &   0 &   8 &   5  \\
Blocks Ranking Size        & 3 &  68 &  46 &  72 &  65 &   0 &   0 &   5 &   1  \\
Click Alarmclock           & 1 & 100 & 100 & 100 & 100 &  81 &  86 & 100 & 100  \\
Click Bell                 & 1 & 100 & 100 & 100 & 100 & 100 & 100 & 100 &  99  \\
Dump Bin Bigbin            & 1 &  95 &  98 &  94 &  93 &  71 &  61 &  67 &  57  \\
Grab Roller                & 1 & 100 & 100 & 100 & 100 &  98 &  94 &  97 &  97  \\
Handover Block             & 2 &  91 &  51 &  71 &  34 &   0 &   0 &  22 &  11  \\
Handover Mic               & 2 &  69 &  71 &  66 &  63 &  44 &  25 &  11 &  22  \\
Hanging Mug                & 2 &  36 &  39 &  32 &  28 &   5 &   2 &   3 &   4  \\
Lift Pot                   & 1 &  99 & 100 &  98 &  95 &   2 &   8 &  25 &  11  \\
Move Can Pot               & 1 &  91 &  80 &  95 &  92 &   2 &   3 &  33 &  16  \\
Move Pillbottle Pad        & 1 &  99 &  94 &  93 &  89 &  25 &  17 &  58 &  58  \\
Move Playingcard Away      & 1 & 100 &  96 & 100 & 100 &  86 &  82 &  97 &  90  \\
Move Stapler Pad           & 1 &  61 &  46 &  39 &  36 &  14 &  15 &  16 &  10  \\
Open Laptop                & 1 &  94 &  87 &  95 &  94 &  74 &  70 &  64 &  62  \\
Open Microwave             & 1 &  67 &  70 &  25 &  27 &  26 &  27 &  59 &  62  \\
Pick Diverse Bottles       & 2 &  92 &  66 &  92 &  88 &  11 &   4 &  60 &  42  \\
Pick Dual Bottles          & 2 & 100 &  85 &  99 &  84 &   4 &   3 &  74 &  56  \\
Place A2B Left             & 1 &  91 &  81 &  88 &  93 &  63 &  49 &  72 &  57  \\
Place A2B Right            & 1 &  92 &  91 &  91 &  92 &  61 &  46 &  68 &  67  \\
Place Bread Basket         & 1 &  93 &  85 &  89 &  72 &  51 &  31 &  55 &  46  \\
Place Bread Skillet        & 2 &  89 &  85 &  86 &  88 &  53 &  44 &  51 &  46  \\
Place Burger Fries         & 2 &  97 &  95 &  94 &  93 &  74 &  63 &  80 &  81  \\
Place Can Basket           & 2 &  83 &  75 &  79 &  76 &  16 &  15 &  18 &  25  \\
Place Cans Plasticbox      & 2 & 100 &  97 &  96 &  97 &   2 &   1 &   5 &   4  \\
Place Container Plate      & 1 &  99 &  98 &  97 &  97 &  87 &  67 &  88 &  94  \\
Place Dual Shoes           & 2 &  78 &  81 &  65 &  64 &   0 &   2 &  20 &  21  \\
Place Empty Cup            & 1 & 100 &  98 &  99 &  99 &  26 &  27 &  69 &  67  \\
Place Fan                  & 1 &  83 &  77 &  65 &  78 &  25 &  20 &  21 &  34  \\
Place Mouse Pad            & 1 &  89 &  84 &  85 &  80 &  33 &  18 &  48 &  46  \\
Place Object Basket        & 2 &  87 &  77 &  78 &  84 &  17 &  14 &  59 &  47  \\
Place Object Scale         & 1 &  95 &  86 &  96 &  97 &  47 &  26 &  66 &  52  \\
Place Object Stand         & 1 & 100 &  95 &  95 &  92 &  15 &   9 &  69 &  72  \\
Place Phone Stand          & 1 &  97 &  94 &  96 &  95 &  60 &  45 &  87 &  74  \\
Place Shoe                 & 1 &  91 &  92 &  56 &  79 &  32 &  23 &  55 &  46  \\
Press Stapler              & 1 &  93 &  93 &  89 &  93 &  88 &  78 &  84 &  78  \\
Put Bottles Dustbin        & 3 &  44 &  30 &  19 &  15 &   0 &   0 &   0 &   0  \\
Put Object Cabinet         & 2 &  79 &  54 &  62 &  39 &   3 &   1 &   6 &   2  \\
Rotate QRcode              & 1 &  94 &  92 &  88 &  82 &  51 &  60 &  56 &  48  \\
Scan Object                & 2 &  88 &  80 &  78 &  68 &  25 &  22 &  29 &  22  \\
Shake Bottle Horizontally  & 1 &  99 &  95 &  99 &  95 &  97 &  88 &  99 &  94  \\
Shake Bottle               & 1 &  99 &  97 & 100 &  97 &  94 &  85 &  99 &  95  \\
Stack Blocks Three         & 3 &  96 &  92 &  90 &  89 &   0 &   0 &  11 &  12  \\
Stack Blocks Two           & 2 & 100 &  99 & 100 &  99 &  10 &   4 &  67 &  57  \\
Stack Bowls Three          & 3 &  76 &  78 &  60 &  67 &  24 &  10 &  19 &  35  \\
Stack Bowls Two            & 2 &  93 &  93 &  97 &  90 &  57 &  40 &  60 &  66  \\
Stamp Seal                 & 1 &  96 &  83 &  89 &  85 &  14 &   6 &  31 &  30  \\
Turn Switch                & 1 &  56 &  49 &  61 &  61 &  62 &  49 &  67 &  43  \\
\midrule
\textbf{Average (\%)}      & -- & 88.42 & 82.66 & 82.56 & 80.26 & 39.68 & 32.96 & 52.66 & 48.46 \\
\bottomrule
\end{tabular}%
}
\end{table}

\section{Limitations and future work}
Our experiments are in simulation; real-world deployment on physical robots remains for future work. Flash-WAM targets the shared-backbone WAM regime, and extending the framework to multi-model architectures with separate per-modality sub-models is a natural next step. We characterize optimal gradient scaling in the low-$\sigma$ regime where actions train; a corresponding analysis for the high-$\sigma$ regime would complete the analytical picture. Finally, the modality-aware selection principle may transfer to distribution-matching distillation methods and to other multi-modal diffusion settings with heterogeneous noise schedules which need further analysis.

\section{LLM Usage}
Large language models were used in a limited and clearly bounded role during the preparation of this paper. Specifically, we used LLMs to assist with writing tasks: improving grammar, rephrasing sentences for clarity, suggesting alternative wordings, and helping to tighten verbose passages. LLMs were also occasionally used to verify that technical phrasings followed standard conventions in the diffusion and step-distillation literature.
All technical claims, numerical results, and conclusions in the paper reflect the authors' own findings and are not generated, suggested, or substantively shaped by LLM output.



\end{document}